\documentclass[11pt]{article}

\usepackage[preprint]{acl}

\usepackage{times}
\usepackage{latexsym}

\usepackage[T1]{fontenc}

\usepackage[utf8]{inputenc}

\usepackage{microtype}

\usepackage{inconsolata}

\usepackage{graphicx}

\usepackage{booktabs}       
\usepackage{amsfonts}       
\usepackage{nicefrac}       
\usepackage{microtype}      
\usepackage{xcolor}         

\usepackage{algorithm}
\usepackage{algpseudocode}
\usepackage{booktabs}
\usepackage{multirow}

\usepackage{amsmath, amssymb}
\usepackage{listings}

\newcommand{\mat}[1]{\mathbf{#1}}

\usepackage{amsthm}

\newtheorem{theorem}{Theorem}

\title{ZO-Act: Efficient Zeroth-Order Fine-Tuning via One-Shot Activation-Informed Low-Rank Subspaces}

\author{
  \begin{tabular}{@{}l@{}}
    \textbf{Xun Dong}$^1$ \quad \textbf{Yibo Xu}$^1$ \quad 
    \textbf{Naigang Wang}$^2$ \quad
    \textbf{Xin Li}$^1$ 
    \quad
     \textbf{Penghang Yin}$^1$ \quad \textbf{Zi Yang}$^{1,\dagger}$
  \end{tabular}
  \\
  $^1$University at Albany, SUNY \quad $^2$IBM T. J. Watson Research Center \\
  \texttt{\{xdong5, yxu25, xli48, pyin, zyang8\}@albany.edu} \\
  \texttt{nwang@us.ibm.com}
}

\begin{document}
\maketitle

\begingroup
  \renewcommand\thefootnote{}
  \footnotetext{$^\dagger$ Corresponding author}
\endgroup

\begin{abstract}

Zeroth-order (ZO) optimization enables fine-tuning large language models when backpropagation is unavailable or memory-prohibitive, but existing methods often perturb full model weights or randomly constructed low-dimensional subspaces, yielding high-variance estimates and limited performance. We propose ZO-Act, an activation-informed ZO fine-tuning method that restricts perturbations to a fixed low-rank subspace derived from input activations. For each linear layer, ZO-Act computes a small activation basis once at initialization and optimizes only lightweight coefficient matrices using forward-only loss evaluations. This reduces the effective perturbation dimension, exposes explicit trainable variables compatible with momentum-based optimizers such as Adam, and naturally supports quantized LLM fine-tuning by keeping low-bit weights frozen. We analyze ZO-Act as zeroth-order optimization over a restricted coefficient space and show that perturbing the low-dimensional coefficients reduces both the variance-dependent convergence term and the finite-difference error of the ZO estimator, at the cost of a controlled subspace approximation bias that is mitigated by the low-rank structure of LLM activations and gradients. Experiments on Llama-3-8B, OPT-13B, and INT4 Llama-3-8B show consistent gains over strong ZO fine-tuning baselines across language understanding, question answering, and commonsense reasoning.
\end{abstract}

\section{Introduction}

Fine-tuning large language models (LLMs) \citep{houlsby2019parameter,Hu2021LoRALA,gurses2025diablo,dettmers2023qlora} has become a standard technique for adapting pretrained models to downstream tasks. 
However, as LLMs scale to billions of parameters, conventional first-order fine-tuning becomes increasingly memory-intensive due to backpropagation, activation storage, and optimizer states. 
Zeroth-order (ZO) optimization provides a promising forward-only alternative, estimating update directions using only loss evaluations and avoiding backpropagation \citep{malladi2023fine}.
Despite its memory advantage, ZO fine-tuning suffers from high gradient-estimation variance when perturbations are sampled in the full parameter space. 
MeZO \citep{malladi2023fine} showed that in-place ZO-SGD can fine-tune very large models with inference-level memory, but later studies identified slow convergence and instability as major limitations. 

While ZO optimization offers an appealing alternative, matching FO optimization in terms of convergence and accuracy is still challenging. The core issue is that gradients estimated from function queries often have high variance, and this variance worsens in higher-dimensional problems \citep{duchi2015optimal,nesterov2017random,liu2018zeroth}.
Subsequent work has improved ZO fine-tuning through better optimizers and more structured perturbations, including ZO-Adam and momentum variants \citep{zhang2024revisiting}, ZO-Muon \citep{lang2026powering}, curvature-aware preconditioning \citep{zhao2025second}, low-rank perturbations \citep{yu2024subzero,chen2025enhancing,lin2026agzo}, and sparse perturbations \citep{liu2026sparse}. A particularly effective direction is low-rank or subspace-based ZO perturbation, which reduces the perturbation dimension to improve estimator stability. 
LOZO \citep{chen2025enhancing} and SubZero \citep{yu2024subzero} restrict perturbations to low-rank matrices to reduce variance. 
AGZO \citep{lin2026agzo} further incorporates activation information into low-rank ZO perturbations, showing that activation-aware directions can provide stronger update signals. 
ZO-Act uses activation information in a different way. Instead of only guiding the construction of perturbation directions, it uses activations to define a fixed low-dimensional parameterization. 
Specifically, each adapted weight update is represented by a frozen activation-informed basis and a small trainable coefficient matrix. 
This turns ZO fine-tuning into explicit subspace optimization, reducing the perturbation dimension, enabling standard momentum-based optimizers such as Adam, avoiding full-weight perturbation materialization, and naturally supporting quantized backbones by keeping the original weights frozen.

We therefore propose ZO-Act, a one-shot activation-informed ZO fine-tuning method. 
For each adapted linear layer, ZO-Act computes a low-rank basis from input activations using a calibration batch, freezes this basis throughout fine-tuning, and optimizes only the corresponding coefficient matrix. 
This one-shot fixed-parameterization design provides a simple forward-only alternative to perturbation-based low-rank ZO methods while preserving the memory advantages needed for large and quantized LLM adaptation.

\begin{itemize}
    \item We propose ZO-Act, a one-shot activation-informed ZO fine-tuning method that uses input activations to define a fixed low-dimensional subspace.
ZO-Act freezes the activation basis and optimizes only lightweight coefficient matrices, turning full-weight perturbation into explicit coefficient-space optimization.
This reduces the effective perturbation dimension, exposes explicit trainable variables compatible with standard momentum-based optimizers such as Adam, and naturally supports quantized LLM fine-tuning by keeping the low-bit weights frozen.

    \item We analyze ZO-Act as zeroth-order optimization over a restricted coefficient space and show that, by perturbing the $k$-dimensional coefficients instead of the $d$-dimensional weights, it reduces both the variance-dependent convergence term and the finite-difference error of the ZO estimator.
    The analysis also makes the resulting trade-off explicit: ZO-Act obtains lower-variance estimation at the cost of a controlled subspace approximation bias, which is mitigated by the low-rank structures of LLM activations and gradients.

    \item We demonstrate that ZO-Act consistently improves over strong ZO baselines on Llama-3-8B, OPT-13B, and INT4 Llama-3-8B across language understanding, question answering, and commonsense reasoning tasks, showing its effectiveness for both full-precision and quantized forward-only fine-tuning.
\end{itemize}

\section{Related Work}

\paragraph{Zeroth-order fine-tuning.}
MeZO \citep{malladi2023fine} adapts simultaneous perturbation stochastic approximation to language model fine-tuning by estimating an update direction from two forward losses evaluated under opposite random perturbations. 
Its in-place implementation avoids storing explicit perturbation vectors and therefore substantially reduces training memory. 
Nevertheless, MeZO samples perturbations in the full parameter space, so the resulting estimator remains highly noisy in billion-dimensional models.

A number of recent works improve this basic ZO fine-tuning pipeline through optimizer design, curvature information, and structured parameter selection. 
Zhang et al. \citep{zhang2024revisiting} provide a systematic study of ZO variants for LLM adaptation, including momentum, Adam-style updates, conservative updates, block-wise descent, and hybrid ZO-FO training. 
HiZOO \citep{zhao2025second} estimates diagonal Hessian information and uses it to precondition ZO updates, addressing the heterogeneous curvature of LLM loss landscapes. 
ZO-Muon \citep{lang2026powering} combines matrix-structured ZO updates with Muon-style optimization to improve the stability and effectiveness of LLM fine-tuning. 
These methods demonstrate that reducing estimator noise and improving update geometry are crucial for making ZO fine-tuning competitive.

\paragraph{Structured ZO perturbations.}
Another closely related line of work reduces the perturbation dimension by imposing structured perturbations. 
S-MeZO \citep{liu2026sparse} reduces the update dimension by selecting a sparse set of sensitive parameters. 
LOZO \citep{chen2025enhancing} and SubZero \citep{yu2024subzero} perform ZO optimization in randomly generated low-dimensional subspaces. ZO-Muon \citep{lang2026powering} further exploits low-rank structure by combining subspace gradient orthogonalization with Muon-style matrix updates, improving the stability and effectiveness of ZO fine-tuning. 
AGZO \citep{lin2026agzo} incorporates activation information into the construction of low-rank ZO perturbations, showing that activation-aware directions can provide more informative update signals than purely random perturbations. ZO-Act is most closely related to this subspace-based line of work, but differs in how the subspace is used. 
Existing methods primarily use low-dimensional subspaces to construct more effective perturbation directions, whereas ZO-Act uses input activations to define a fixed update parameterization. 
The activation-informed basis is computed once and kept frozen, and fine-tuning is performed by optimizing only the lightweight coefficient matrices.

\section{Methodology}
\label{sec:method}


\textbf{Motivation.}
Consider a standard linear layer with input dimension $m$ and output dimension $n$,
$
    \mat{Y} = \mat{X}\mat{W},
$
where $\mat{X}\in \mathbb{R}^{b\times m}$, $\mat{W}\in \mathbb{R}^{m\times n}$, and $\mat{Y}\in \mathbb{R}^{b\times n}$. Let $\mat{g}_{\mat{Y}}\in \mathbb{R}^{b\times n}$ denote the gradient of the final loss with respect to the layer output $\mat{Y}$. Then the gradient with respect to the weight matrix $\mat{W}$ is
$
    \mat{g}_{\mat{W}} = \mat{X}^\top \mat{g}_{\mat{Y}}.
$
Let
$
    \mat{X} = \mat{U}\mat{D}\mat{V}^\top
$
be the thin SVD of $\mat{X}$, where the diagonal entries of $\mat{D}$ are the singular values in descending order. Denote the top-$r$ components by
\[
    \mat{U}_r := \mat{U}_{:,:r}, 
    \quad
    \mat{V}_r := \mat{V}_{:,:r}, 
    \quad
    \mat{D}_r := \mat{D}_{:r,:r}.
\]
The weight gradient can be written as
\[
    \mat{g}_{\mat{W}}
    =
    \mat{V}\mat{D}\mat{U}^\top \mat{g}_{\mat{Y}}.
\]
If the singular values of $\mat{X}$ decay rapidly, then $\mat{g}_{\mat{W}}$ can be well approximated by its rank-$r$ truncation:
\[
    \mat{g}_{\mat{W}}
    \approx
    \mat{V}_r\mat{D}_r\mat{U}_r^\top \mat{g}_{\mat{Y}}.
\]
This suggests that the dominant components of the weight gradient lie in the subspace spanned by the top right singular vectors of the input activation matrix, namely $\mat{V}_r$.

The same intuition can also be understood from the perspective of weight perturbation. Suppose we perturb the weight matrix along the activation-informed subspace:
$
    \Delta \mat{W} = \mat{V}_r \mat{R},
    \qquad
    \mat{R}\sim \mathcal{N}(0, I).
$
Then the induced perturbation on the layer output is
\[
    \Delta \mat{Y}
    =
    \mat{X}\Delta \mat{W}
    =
    \mat{U}\mat{D}\mat{V}^\top \mat{V}_r \mat{R}
    =
    \mat{U}_r\mat{D}_r \mat{R}.
\]
Thus, perturbing $\mat{W}$ within the span of $\mat{V}_r$ directly targets the dominant activation directions of the layer. In contrast, perturbations outside this subspace are largely suppressed by the small singular values of $\mat{X}$ and have limited effect on the layer output. Therefore, restricting zeroth-order perturbations to the $\mat{V}_r$ subspace preserves the most effective perturbation directions while substantially reducing the perturbation dimension.

\noindent \textbf{ZO-Act.}
Motivated by the above observation, we restrict zeroth-order perturbations to the dominant right singular subspace of the input activations. For each linear layer, we first compute a fixed activation-informed basis from a calibration batch, and then perform zeroth-order optimization only over a low-dimensional coefficient matrix.

Consider a linear layer with pretrained weight $\mat{W}\in\mathbb{R}^{m\times n}$. Let $\mat{V}_r\in\mathbb{R}^{m\times r}$ be the top-$r$ right singular vectors of the input activation matrix $\mat{X}$. We parameterize the weight update as
\[
    \Delta \mat{W} = \mat{V}_r \mat{B},
\]
where $\mat{B}\in\mathbb{R}^{r\times n}$ is the only trainable parameter for this layer. The effective weight is therefore
\[
    \mat{W}_{\mathrm{eff}}
    =
    \mat{W} + \mat{V}_r \mat{B}.
\]
The pretrained weight $\mat{W}$ and the activation basis $\mat{V}_r$ are both frozen throughout training, while $\mat{B}$ is initialized as zero.

At each training step, we sample a random perturbation in the low-dimensional coefficient space:
\[
    \mat{Z}\sim \mathcal{N}(0,I),
    \qquad
    \mat{Z}\in\mathbb{R}^{r\times n}.
\]
The corresponding weight-space perturbation is
\[
    \Delta \mat{W}_{\mathrm{ZO}}
    =
    \mat{V}_r \mat{Z}.
\]
Given a perturbation magnitude $\mu>0$, the forward-perturbed effective weight is
\[
    \mat{W}_{\mathrm{eff}}^{+}
    =
    \mat{W} + \mat{V}_r(\mat{B}+\mu \mat{Z}).
\]
For an input activation $\mat{X}$, the perturbed forward pass can be written as
\[
    \mat{Y}^{+}
    =
    \mat{X}\mat{W}
    +
    (\mat{X}\mat{V}_r)(\mat{B}+\mu\mat{Z}).
\]

Let $\mathcal{L}^{+}$ denote the loss from the forward-perturbed model and let $\mathcal{L}$ denote the unperturbed loss on the same mini-batch. The forward-difference zeroth-order estimator for $\mat{B}$ is
\[
    \widehat{\mat{g}}_{\mat{B}}
    =
    \frac{\mathcal{L}^{+}-\mathcal{L}}{\mu}\mat{Z}.
\]
With $q$ independent perturbation directions, we average the estimates:
\[
    \widehat{\mat{g}}_{\mat{B}}
    =
    \frac{1}{q}
    \sum_{j=1}^{q}
    \frac{\mathcal{L}^{(j,+)}-\mathcal{L}}{\mu}
    \mat{Z}^{(j)}.
\]
We then update $\mat{B}$ using a first-order optimizer such as Adam with the estimated gradient $\widehat{\mat{g}}_{\mat{B}}$. Since the perturbation is sampled in $\mathbb{R}^{r\times n}$ rather than $\mathbb{R}^{m\times n}$, the effective perturbation dimension is reduced from $mn$ to $rn$, where $r\ll m$. The whole ZO-Act algorithm is summarized in Algorithm \ref{alg:lozo-init-subspace} and Algorithm \ref{alg:lozo-finetune-short}.


\begin{algorithm}[!t]
\small
\caption{Subspace Initialization}
\label{alg:lozo-init-subspace}
\begin{algorithmic}[1]
\Require Pretrained model, calibration batch $\mathcal{B}_{\mathrm{cal}}$, rank $r$
\State Run one forward pass on $\mathcal{B}_{\mathrm{cal}}$ and collect input activations $\mat{X}_\ell$ for each linear layer $\ell$.
\For{each linear layer $\ell$ with weight $\mat{W}_\ell\in\mathbb{R}^{m_\ell\times n_\ell}$}
    \State Compute the top-$r$ right singular vectors of $\mat{X}_\ell$:
    \[
        \mat{X}_\ell \approx \mat{U}_{\ell,r}\mat{D}_{\ell,r}\mat{V}_{\ell,r}^\top .
    \]
    \State Freeze the pretrained weight $\mat{W}_\ell$ and the basis $\mat{V}_{\ell,r}$.
    
\EndFor
\State \Return Frozen bases $\mat{V}_{\ell,r}$.
\end{algorithmic}
\end{algorithm}

\begin{algorithm}[!t]
\footnotesize
\caption{ZO-Act}
\label{alg:lozo-finetune-short}
\begin{algorithmic}[1]
\Require Data $\mathcal{D}$, pretrained weights $\{\mat{W}_\ell\}_\ell$, rank $r$, perturbation scale $\mu$, learning rate $\eta$, queries $q$, steps $T$

\State Sample calibration batch $\mathcal{B}_{\mathrm{cal}} \subset \mathcal{D}$.

\State Obtain frozen subspaces $\{\mat{V}_{\ell,r}\}_\ell$ using Algorithm~\ref{alg:lozo-init-subspace} with $\mathcal{B}_{\mathrm{cal}}$ and rank $r$.

\State Initialize the trainable matrix
    $\mat{B}_\ell \gets \mat{0}\in\mathbb{R}^{r\times n_\ell}$ for all $\ell$.

\For{$t=1,\dots,T$}
    \State Sample mini-batch $\mathcal{B}_t\subset\mathcal{D}$.
    \State Evaluate $\mathcal{L}$ with $\mat{W}_{\ell,\mathrm{eff}}=\mat{W}_\ell+\mat{V}_{\ell,r}\mat{B}_\ell$.
    \State Initialize $\widehat{\mat{g}}_{\mat{B}_\ell}\gets 0$ for all $\ell$.

    \For{$j=1,\dots,q$}
        \State Sample $\mat{Z}_\ell^{(j)}\sim\mathcal{N}(0,I)$ for all $\ell$.
        \State Evaluate $\mathcal{L}^{(j,+)}$ with 
        $\mat{W}_{\ell,\mathrm{eff}}^{(j,+)}=\mat{W}_\ell+\mat{V}_{\ell,r}(\mat{B}_\ell$  
        \Statex \hspace{\algorithmicindent} \hspace{\algorithmicindent} $+\mu\mat{Z}_\ell^{(j)})$.
        \State $a^{(j)}\gets(\mathcal{L}^{(j,+)}-\mathcal{L})/\mu$.
        \State $\widehat{\mat{g}}_{\mat{B}_\ell}\gets
        \widehat{\mat{g}}_{\mat{B}_\ell}
        +a^{(j)}\mat{Z}_\ell^{(j)}$ for all $\ell$.
    \EndFor

    \State $\widehat{\mat{g}}_{\mat{B}_\ell}\gets
    \widehat{\mat{g}}_{\mat{B}_\ell}/q$ for all $\ell$.
    \State Update each $\mat{B}_\ell$ with an optimizer using $\widehat{\mat{g}}_{\mat{B}_\ell}$.
\EndFor

\State \Return Fine-tuned model with 
$\mat{W}_{\ell,\mathrm{eff}}=\mat{W}_\ell+\mat{V}_{\ell,r}\mat{B}_\ell$.
\end{algorithmic}
\end{algorithm}

\vspace{-15pt}
\section{Theoretical Analysis}
\label{sec:theory}


We analyze ZO-Act as zeroth-order optimization over a restricted coefficient space. 
Let \(F(\theta)\) denote the fine-tuning objective, where \(\theta\in\mathbb{R}^{d}\) collects the weight parameters of all adapted linear layers. 
As described in Section~\ref{sec:method}, ZO-Act freezes the pretrained weights and the activation-informed bases, and optimizes only the coefficient matrices \(\{\mat B_\ell\}_{\ell=1}^{L}\). 
Let \(\beta\in\mathbb{R}^{k}\) collect all trainable coefficient matrices, where \(k=\sum_{\ell=1}^{L} r n_\ell\). 
Equivalently, there exists a fixed activation-informed embedding \(U\) such that \(\theta(\beta)=\theta_0+U\beta\), where \(\theta_0\) denotes the frozen pretrained weights. 
Thus, ZO-Act optimizes the restricted objective \(\phi(\beta)=F(\theta_0+U\beta)\), rather than the full objective \(F(\theta)\) over all weight entries.

This restricted formulation creates a variance--bias trade-off. 
Because ZO-Act samples perturbations only in the \(k\)-dimensional coefficient space, the variance-dependent term in the ZO estimator is governed by \(k\), rather than the full weight dimension \(d\). 
At the same time, restricting updates to the range of \(U\) introduces a subspace approximation bias: ZO-Act can only reduce the component of the gradient captured by the activation-informed update subspace.

We first examine the variance of the estimated gradient in the coefficient space. 
Let \(g_t=\nabla\phi(\beta_t)\). 
For a single Gaussian direction \(z_t\sim\mathcal N(0,I_k)\), the one-sided estimator is
\[
    \widehat g_t
    =
    \frac{\phi(\beta_t+\mu z_t)-\phi(\beta_t)}{\mu}z_t .
\]
For small \(\mu\), the leading term is
$
    a_t=\langle g_t,z_t\rangle z_t .
$
This term is unbiased, since \(\mathbb E[a_t]=g_t\). 
Its second moment is
\[
    \mathbb E\|a_t\|^2
    =
    \mathbb E[\langle g_t,z_t\rangle^2\|z_t\|^2]
    =
    (k+2)\|g_t\|^2 .
\]
Therefore,
\[
    \mathbb E\|a_t-g_t\|^2
    =
    \mathbb E\|a_t\|^2-\|g_t\|^2
    =
    (k+1)\|g_t\|^2 .
\]
With \(q\) independent perturbation directions, define \(\bar a_t=q^{-1}\sum_{j=1}^q a_t^{(j)}\). 
Then
\[
    \mathbb E[\bar a_t]=g_t,
    \qquad
    \mathbb E\|\bar a_t-g_t\|^2
    =
    \frac{k+1}{q}\|g_t\|^2 .
\]
Thus, the leading variance of the ZO gradient estimator scales linearly with the perturbation dimension and decreases as \(1/q\) with multi-query averaging.

We now analyze the convergence of ZO-Act on the objective \(\phi(\beta)=F(\theta_0+U\beta)\). 
The analysis follows the standard analysis framework for gradient descent with a zeroth-order gradient estimator. 

\begin{theorem}[Informal convergence of ZO-Act]
\label{thm:zoact-convergence-informal}
Let \(\phi(\beta)=F(\theta_0+U\beta)\) be the objective optimized by ZO-Act, where \(\beta\in\mathbb{R}^k\) collects all trainable coefficient matrices. 
Assume that $F$ is \(L_F\)-smooth and $\phi$ is lower bounded by \(\phi_k^{\inf}\). 
When ZO-Act applies gradient descent using a \(q\)-query one-sided Gaussian zeroth-order estimator with perturbation scale $\mu$ in the coefficient space, the iterates satisfy
\[
\small
\begin{aligned}
    \frac{1}{T}\sum_{t=0}^{T-1}
    \mathbb{E}\|\nabla\phi(\beta_t)\|^2
    \le\;&
    O\left(
        \frac{
            L_F
            \left(1+\tfrac{k}{q}\right)
            \left(\phi(\beta_0)-\phi_k^{\inf}\right)
        }{T}
    \right) \\
    &+
    O\left(
        L_F^2\mu^2 k^3
    \right).
\end{aligned}
\]

\end{theorem}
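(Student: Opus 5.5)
We follow the standard descent-lemma analysis for zeroth-order gradient descent, carried out entirely in the coefficient space $\mathbb{R}^k$. First we transfer smoothness from $F$ to $\phi$. Since $\nabla\phi(\beta)=U^\top\nabla F(\theta_0+U\beta)$, we have $\|\nabla\phi(\beta)-\nabla\phi(\beta')\|\le L_F\|U\|^2\|\beta-\beta'\|$. The activation-informed bases $\mat V_{\ell,r}$ have orthonormal columns and act blockwise on the coefficient matrices, so $U$ has orthonormal columns and $\|U\|=1$. Hence $\phi$ is $L_F$-smooth, and we write $L:=L_F$.

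Second, we bound the first two moments of the estimator $\widehat g_t=q^{-1}\sum_j \mu^{-1}(\phi(\beta_t+\mu z^{(j)})-\phi(\beta_t))z^{(j)}$. Smoothness gives the Taylor bound
\[
\phi(\beta_t+\mu z)-\phi(\beta_t)=\mu\langle g_t,z\rangle+\rho(z),\qquad |\rho(z)|\le \tfrac{L\mu^2}{2}\|z\|^2 .
\]
Thus each query equals $a_t^{(j)}+e^{(j)}$ with $\|e^{(j)}\|\le \tfrac{L\mu}{2}\|z^{(j)}\|^3$. The bias satisfies $\|\mathbb E\widehat g_t-g_t\|\le \tfrac{L\mu}{2}\mathbb E\|z\|^3\le \tfrac{L\mu}{2}(k+3)^{3/2}$, using the Gaussian moment bound $\mathbb E\|z\|^p\le (k+p)^{p/2}$. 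For the second moment we split $\widehat g_t$ into the leading part $\bar a_t$ and the remainder $\bar e$. The leading part is handled by the variance computation already in the paper, $\mathbb E\|\bar a_t\|^2=(1+\tfrac{k+1}{q})\|g_t\|^2$. Applying $(x+y)^2\le 2x^2+2y^2$ to the remainder gives
\[
\mathbb E\|\widehat g_t\|^2\le 2\Big(1+\tfrac{k+1}{q}\Big)\|g_t\|^2+\tfrac{L^2\mu^2}{2}(k+6)^3 .
\]

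Third, we apply the descent lemma to $\beta_{t+1}=\beta_t-\eta\widehat g_t$:
\[
\mathbb E\phi(\beta_{t+1})\le \phi(\beta_t)-\eta\langle g_t,\mathbb E\widehat g_t\rangle+\tfrac{L\eta^2}{2}\mathbb E\|\widehat g_t\|^2 .
\]
We bound the cross term with Young's inequality, $-\langle g_t,\mathbb E\widehat g_t\rangle\le -\tfrac12\|g_t\|^2+\tfrac12\|\mathbb E\widehat g_t-g_t\|^2$. Next we choose the step size $\eta=\frac{1}{4L(1+(k+1)/q)}$. This makes the $\|g_t\|^2$ coefficient from the second moment at most $\tfrac14\eta$, so
\[
\mathbb E\phi(\beta_{t+1})\le\phi(\beta_t)-\tfrac{\eta}{4}\|g_t\|^2+\eta\,O(L^2\mu^2k^3)+L\eta^2 O(L^2\mu^2k^3).
\]
Since $L\eta\le 1$, both error terms are $\eta\,O(L^2\mu^2k^3)$. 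Taking total expectation, telescoping over $t=0,\dots,T-1$, using $\phi\ge\phi_k^{\inf}$, and dividing by $\eta T/4$ gives
\[
\frac1T\sum_t\mathbb E\|g_t\|^2\le \frac{4(\phi(\beta_0)-\phi_k^{\inf})}{\eta T}+O(L_F^2\mu^2k^3).
\]
Substituting $1/\eta=4L_F(1+(k+1)/q)=O(L_F(1+k/q))$ yields the claimed bound.

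The main obstacle is the second step, namely controlling the finite-difference remainder sharply enough to obtain exactly $k^3$ and not a worse power. This requires the bound $\mathbb E\|z\|^6=k(k+2)(k+4)=O(k^3)$, together with the Cauchy–Schwarz/Jensen estimate $\mathbb E\|\bar e\|^2\le\mathbb E\|e^{(1)}\|^2$. It also requires keeping the bias contribution, which is $O(L^2\mu^2k^3)$ after squaring, from entangling with the variance term. A smaller point is justifying $\|U\|\le 1$. If the bases were not orthonormal, $L_F$ would have to be replaced by $L_F\|U\|^2$; since the theorem is stated with $L_F$, we rely on the orthonormality of the SVD factors $\mat V_{\ell,r}$.
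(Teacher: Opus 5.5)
Your proposal is correct and follows essentially the same argument as the paper's proof. The paper likewise transfers $L_F$-smoothness to $\phi$ via $U^\top U=I$, splits each query into $\langle g_t,z\rangle z$ plus a remainder of norm at most $\tfrac{L_F\mu}{2}\|z\|^3$, bounds the cross term by Young's inequality and the second moment by $2(1+\tfrac{k+1}{q})\|g_t\|^2$ plus a sixth-Gaussian-moment term, then telescopes the descent lemma with $\eta\propto 1/\bigl(L_F(1+\tfrac{k+1}{q})\bigr)$. The differences are only in constants: you fix $\eta=1/\bigl(4L_F(1+\tfrac{k+1}{q})\bigr)$ and use $\mathbb{E}\|z\|^p\le(k+p)^{p/2}$, whereas the paper takes $\eta\le 1/\bigl(8L_F(1+\tfrac{k+1}{q})\bigr)$ and the exact $\mathbb{E}\|z\|^6=k(k+2)(k+4)$ with Cauchy--Schwarz; your derivation of $U^\top U=I$ from the orthonormal SVD bases matches the assumption the paper states explicitly in its formal version.
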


The proof is shown in Appendix \ref{sec:app-proof}.
Theorem~\ref{thm:zoact-convergence-informal} highlights the main optimization benefit of ZO-Act. 
Full-weight ZO corresponds to the special case \(k=d\), where \(d=\sum_{\ell=1}^{L}m_\ell n_\ell\) is the full adapted weight dimension. 
ZO-Act instead perturbs only the coefficient matrices, giving \(k=\sum_{\ell=1}^{L}r n_\ell\). 
Since \(r\ll m_\ell\), we have \(k\ll d\), so the variance-dependent term that governs the convergence of \(\phi\) is substantially reduced, yielding a lower-variance and more stable ZO estimator.

The second term is the finite-difference error induced by the one-sided estimator with nonzero perturbation scale \(\mu\). 
Under smoothness-only assumptions, this term scales as \(O(L_F^2\mu^2 k^3)\). 
For full-weight ZO, the same term scales as \(O(L_F^2\mu^2 d^3)\). 
Since ZO-Act uses the much smaller coefficient dimension \(k\ll d\), it can dramatically reduce this finite-difference noise. 
Thus, the low-dimensional activation-informed parameterization reduces both the leading variance term and the higher-order finite-difference error, making the ZO estimator substantially more stable.

This improvement comes with a subspace approximation bias. 
ZO-Act optimizes the restricted objective \(\phi(\beta)=F(\theta_0+U\beta)\), rather than directly optimizing \(F(\theta)\) over the full adapted weight space. 
Therefore, ZO-Act can only reduce the component of the full gradient that lies in the activation-informed update subspace. 
Let \(P_U=UU^\top\) be the projector onto this subspace.
The theorem controls the projected component through \(\nabla\phi(\beta_t)=U^\top\nabla F(\theta_t)\), while the residual term \(\|(I-P_U)\nabla F(\theta_t)\|^2\) measures the bias introduced by restricting updates to the activation-informed subspace. 
Thus, ZO-Act trades a controlled subspace bias for a much lower-variance zeroth-order estimator. In large language model fine-tuning, this subspace bias is often mitigated by the low-rank structure of adaptation. 
Effective task-specific updates are known to be highly structured \citep{zhao2024galore} and are often well captured by low-dimensional parameterizations such as LoRA \citep{Hu2021LoRALA,gurses2025diablo}. 
ZO-Act leverages this phenomenon in a data-informed way by choosing the subspace from dominant input activation directions, where layerwise gradients are expected to concentrate; we verify this gradient concentration empirically in Section~\ref{sec:subspace-alignment}. 
The method can thus preserve important update directions while greatly reducing ZO estimation noise.

To further understand the subspace bias, consider a least-squares layerwise approximation. 
Let \(\mat R=\mat Y-\mat X\mat W_0\) be the residual target. 
The full update problem minimizes \(\|\mat X\Delta\mat W-\mat R\|_F^2\), while ZO-Act restricts the update to \(\Delta\mat W=\mat V\mat B\) and minimizes \(\|\mat X\mat V\mat B-\mat R\|_F^2\). 
If \(P_X\) and \(P_{XV}\) denote the projectors onto \(\operatorname{col}(\mat X)\) and \(\operatorname{col}(\mat X\mat V)\), respectively, then the restricted optimality gap is
\[
    \phi_V^\star-f^\star
    =
    \|(P_X-P_{XV})\mat R\|_F^2 .
\]
If \(\mat X=\sum_i\sigma_i u_i v_i^\top\) and \(\mat V=\mat V_r\) contains the top \(r\) right singular vectors of \(\mat X\), then \(\operatorname{col}(\mat X\mat V_r)=\operatorname{span}\{u_1,\ldots,u_r\}\). 
Therefore,
\[
    \phi_V^\star-f^\star
    =
    \sum_{i>r}
    \|u_i^\top \mat R\|_2^2 .
\]
Thus, the approximation bias is small when the residual target \(\mat R\) has little energy along the discarded left singular directions of the activation matrix. 
When the activation spectrum is concentrated, the dominant left singular directions often capture the most influential output variations induced by input activations, making this residual energy small in practice. 
This supports the use of activation-informed low-rank subspaces: a small rank can preserve the main update directions while substantially reducing the variance of zeroth-order perturbations.


\vspace{-5pt}
\section{Experiments}
\label{sec:exp}


We evaluate ZO-Act on forward-only fine-tuning tasks for large language models. 
Our experiments cover three settings: language understanding and question answering, commonsense reasoning, and quantized LLM fine-tuning. 
For full-precision models, we consider Llama-3-8B \citep{grattafiori2024llama} and OPT-13B \citep{zhang2022opt}. 
For quantized fine-tuning, we evaluate an INT4 quantized Llama-3-8B model, denoted as Llama-3-8B-w4, which is quantized by MagR \citep{zhang2024magr}.

To ensure a fair comparison with existing ZO baselines, we match or slightly reduce the total number of forward passes used by ZO-Act compared with the corresponding baseline methods whenever possible. 
Unless otherwise specified, ZO-Act uses Gaussian perturbations with $q=8$ perturbation directions per update step and perturbation magnitude $\mu=10^{-3}$. We use rank $r=1$ for full-precision models and rank $r=32$ for the INT4 quantized model. The rank-one subspace is sufficient for full-precision fine-tuning and gives the lowest perturbation dimension, while the quantized model benefits from a moderately larger subspace to compensate for the reduced capacity.
We use the forward-difference estimator and optimize the low-rank coefficient matrices with Adam. 
Full details are shown in Appendix \ref{sec:app-exp}.

\subsection{Language Understanding}
\label{sec:exp-language-qa}

\begin{table*}[t]
\centering
\small
\setlength{\tabcolsep}{2.8pt}
\begin{tabular}{lcccccc|cccccc}
\toprule
\multirow{2}{*}{Method}
& \multicolumn{6}{c|}{Llama-3-8B}
& \multicolumn{6}{c}{OPT-13B} \\
\cmidrule(lr){2-7} \cmidrule(lr){8-13}
& SST-2 & RTE & CB & BoolQ & WiC & SQuAD
& SST-2 & RTE & CB & BoolQ & WiC & SQuAD \\
\midrule
\textbullet~Adam 
& 96.0 & 92.0 & 92.0 & 86.6 & 72.6 & 90.4
& 95.3 & 80.9 & 94.6 & 83.5 & 66.3 & 89.5 \\
\textbullet~LoRA 
& 95.0 & 80.9 & 73.2 & 86.4 & 70.7 & 89.4
& 94.8 & 78.3 & 69.6 & 80.2 & 64.3 & 88.0 \\
MeZO 
& 92.7 & 74.4 & 69.6 & 76.7 & 57.8 & 86.7
& 91.4 & 66.1 & 66.0 & 66.1 & 59.4 & 81.8 \\
S-MeZO 
& 92.1 & 69.7 & 69.6 & 80.5 & 56.9 & 87.5
& 90.4 & 63.5 & 69.6 & 66.4 & 58.8 & 80.8 \\
HiZOO 
& 93.5 & 75.1 & 69.6 & 80.0 & 59.7 & 87.3
& 92.1 & 69.3 & 69.6 & 67.6 & 59.4 & 82.1 \\
LOZO 
& 92.5 & 66.8 & 69.6 & 79.4 & 55.8 & 89.0
& 91.7 & 70.4 & 69.6 & 71.9 & 60.2 & 84.9 \\
SubZero 
& 92.1 & 71.4 & 67.9 & 82.0 & 58.8 & 88.3
& 92.1 & 71.8 & 71.4 & 70.8 & 60.8 & 84.5 \\
Subspace-MeZO 
& 92.3 & 68.6 & 69.6 & 80.0 & 62.9 & 84.5
& 91.7 & 70.7 & 71.4 & 68.1 & \textbf{61.7} & 83.5 \\
AGZO & 93.4 & 82.7 & 71.4 & 84.5 & 62.4 & {90.3}& 
89.8 & 67.5 & 66.1 & 68.1 & 56.0 & \textbf{85.4} \\
ZO-Muon 
& 94.3 & 81.2 & 69.6 & 82.9 & 65.2 & 88.2
& 92.5 & 72.9 & 71.4 & 72.4 & \textbf{61.7} & 84.5 \\
\midrule
ZO-Act Full Adam
& \textbf{94.6} & \textbf{87.0} & {89.3} & \textbf{85.1} & \textbf{69.1} & {89.1}
& \textbf{94.0} & \textbf{74.0} & \textbf{75.0} & \textbf{73.1} & 59.4 & {85.1} \\
ZO-Act Full SGD & 94.0 &  83.8 & \textbf{91.1} & 83.8 & 64.7&\textbf{90.7} & 92.4 & 64.6 & 71.4 & 70.0 & 58.2 & 83.9 
\\
ZO-Act INT4 Adam
& 93.5 & 83.8 & 82.1 & 83.8 & 60.7 & 88.2
& -- & -- & -- & -- & -- & -- \\
\bottomrule
\end{tabular}
\caption{
Language understanding and question answering results of various ZO fine-tuning methods. 
}
\label{tab:language_qa}
\end{table*}

\paragraph{Models and Datasets.}
Following the setting of \citet{lang2026powering}, ZO-Act is evaluated on standard language understanding and question answering tasks. 
The evaluation includes six benchmarks: SST-2, RTE, CB, BoolQ, WiC, and SQuAD in SuperGLUE \citep{wang2019superglue}. 
F1 score is reported for SQuAD, and accuracy is reported for the remaining tasks. 
Experiments are conducted on two full-precision LLMs, Llama-3-8B and OPT-13B, as well as an INT4 quantized Llama-3-8B model.

\paragraph{Baselines.}
We compare ZO-Act with representative zeroth-order fine-tuning methods, including MeZO \citep{malladi2023fine}, S-MeZO \citep{liu2026sparse}, HiZOO \citep{zhao2025second}, LOZO \citep{chen2025enhancing}, SubZero \citep{yu2024subzero}, Subspace-MeZO \citep{lang2026powering}, AGZO \citep{lin2026agzo}, and ZO-Muon \citep{lang2026powering}. 
The baseline results, except for AGZO, are taken from the ZO-Muon paper. 
The AGZO results are obtained using our implementation.

\paragraph{Results.}
Table~\ref{tab:language_qa} reports the language understanding and question answering results. 
ZO-Act with Adam achieves strong performance across both model families. 
On Llama-3-8B, ZO-Act obtains the best full-precision ZO result on SST-2, RTE, BoolQ, and WiC, and achieves especially large gains on RTE and CB compared with prior ZO baselines. 
For example, compared with ZO-Muon, ZO-Act improves RTE from 81.2 to 87.0 and CB from 69.6 to 89.3. 
On OPT-13B, ZO-Act also achieves the best full-precision ZO result on SST-2, RTE, CB, and BoolQ, and remains competitive on WiC and SQuAD. 
These results show that the activation-informed space provides effective update directions and leads to robust forward-only fine-tuning performance across different model architectures.

ZO-Act also remains effective when applied to the INT4 quantized Llama-3-8B model. 
Although the INT4 results are generally lower than the full-precision ZO-Act results, the performance remains comparable to or stronger than many full-precision ZO baselines. 
These results suggest that ZO-Act can preserve strong task adaptation ability even when the pretrained model is quantized, supporting its suitability for memory-constrained fine-tuning.

Finally, comparing ZO-Act with Adam and SGD highlights the importance of optimizing the explicit coefficient matrices with a momentum-based optimizer. 
Replacing Adam with SGD leads to noticeably worse performance on most tasks, especially on OPT-13B. 
This confirms one practical advantage of the ZO-Act parameterization: by exposing lightweight trainable coefficient matrices, it allows standard optimizers such as Adam to be directly applied to ZO gradient estimates.

\subsection{Commonsense Reasoning}
\label{sec:exp-commonsense}

\begin{table*}[ht]
\centering
\small
\setlength{\tabcolsep}{4.2pt}
\begin{tabular}{lccccccccc}
\toprule
Method 
& ARC-c & ARC-e & BoolQ & HellaS & OBQA & PIQA & SIQA & WinoG & Average \\
\midrule
Zero-shot &10.0 & 12.5 & 61.9& 12.3 & 23.4 & 47.4 & 2.2 & 0.0 & 21.2 \\
\midrule
LOZO & 51.9 & 69.6 & 62.6 & 48.3 & 53.0 & 70.2 & 54.5 & 50.6 & 57.6\\
AGZO & 65.0 & 83.3 & 64.5& 70.7 &60.8 &\textbf{79.9} &63.5 &57.7 &68.2\\
SubZero 
& 60.2 & 78.3 & 62.5 & 68.0 & 53.6 & 76.7 & 59.2 & 52.6 & 63.9 \\
HiZOO & 60.3 & 79.6 & 63.6 & 65.6 & 58.2 & 73.1 & 58.6 & 53.5 & 64.1\\
ZO-Muon 
& 61.4 & 78.0 & 63.0 & 63.6 & 50.8 & 76.7 & 57.3 & 53.6 & 63.1 \\
\midrule
ZO-Act 
& \textbf{68.7} & \textbf{85.9} & \textbf{64.7} & \textbf{76.4} & \textbf{65.6} 
& {79.6} & \textbf{67.3} & \textbf{58.4} & \textbf{70.8} \\
ZO-Act INT4 & 60.5 & 80.3 & 62.2 & 69.9 & 60.2 & 77.0 & 61.8 & 53.5 & 65.7 \\ 
\bottomrule
\end{tabular}
\caption{
Commonsense reasoning results on Llama-3-8B. 
}
\vspace{-10pt}
\label{tab:commonsense}
\end{table*}

\paragraph{Models and Datasets.}
We further evaluate ZO-Act on full-data commonsense reasoning tasks. 
The benchmark consists of eight tasks: ARC-Challenge, ARC-Easy \citep{clark2018think}, BoolQ \citep{clark2019boolq}, HellaSwag \citep{zellers2019hellaswag}, OpenBookQA \citep{mihaylov2018can}, PIQA \citep{bisk2020piqa}, SIQA \citep{sap2019social}, and WinoGrande \citep{sakaguchi2021winogrande}. 
We fine-tune Llama-3-8B on the combined training data and report accuracy on each evaluation task individually, as well as the average accuracy. All tasks are evaluated under an open-ended generation protocol with answer-match scoring.

\paragraph{Baselines.}
We compare ZO-Act with representative ZO baselines, including LOZO, AGZO, SubZero, HiZOO, and ZO-Muon. 
ZO-Act and ZO-Muon are trained for 20k steps using forward differences with $q=8$, corresponding to 9 forward passes per step, including one unperturbed forward pass and eight perturbed forward passes. 
The remaining baselines are trained for 100k steps with 2 forward passes per step. 
Thus, ZO-Act uses no more total forward evaluations than these baselines.

\paragraph{Results.}
Table~\ref{tab:commonsense} reports the commonsense reasoning results on Llama-3-8B. 
ZO-Act achieves the best average performance among all evaluated ZO methods and obtains the top result on nearly all sub-tasks. 
Compared with prior baselines, ZO-Act shows consistent improvements across the benchmark, indicating that the activation-informed coefficient space provides effective update directions beyond the tasks considered in Table~\ref{tab:language_qa}. 
When ZO-Act is applied to the INT4 quantized model, it remains competitive with strong full-precision ZO baselines, despite some degradation relative to the full-precision ZO-Act model. 
This suggests that the same ZO-Act algorithm can be effectively used on quantized models while retaining strong adaptation performance.

\subsection{Stability and Gradient Alignment of Activation Subspaces}
\label{sec:subspace-alignment}

\begin{figure}[t]
\centering
\includegraphics[width=0.8\columnwidth]{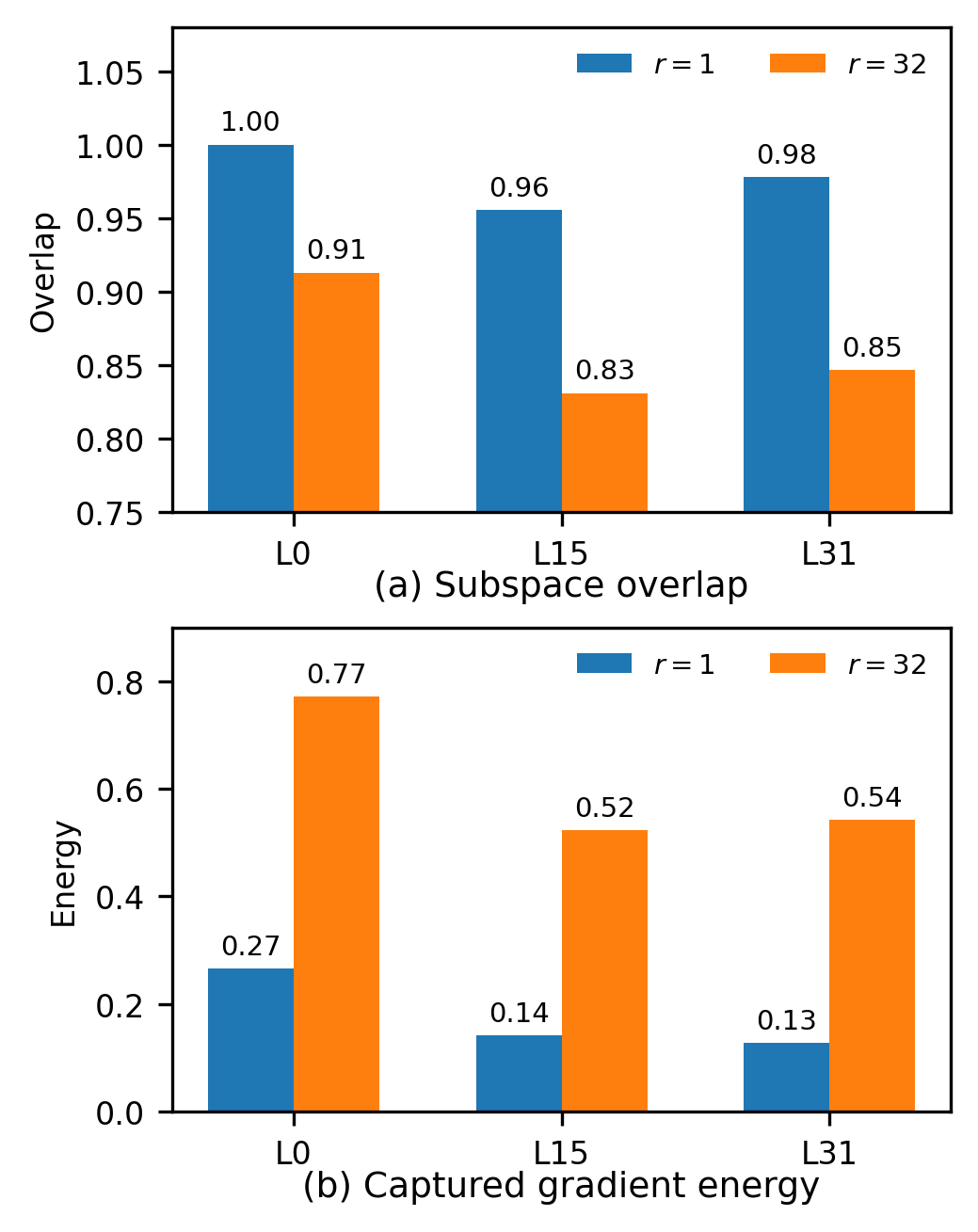}
\caption{
Stability and gradient alignment of activation-informed subspaces at epoch 3 of Llama-3-8B RTE full fine-tuning.
We show the q-projection layers at decoders 0, 15, and 31.
}
\vspace{-15pt}
\label{fig:subspace-diagnostic}
\end{figure}

ZO-Act computes the activation-informed basis once at initialization and keeps it fixed during fine-tuning. 
To validate this design choice, we perform a diagnostic study using a separate full first-order fine-tuning run of Llama-3-8B on RTE for three epochs. We inspect the query projection layers at depths 0, 15, and 31.

For each layer $\ell$, let $\mat V_{\ell,0}\in\mathbb R^{m_\ell\times r}$ denote the top-$r$ right singular vectors of the input activation matrix at initialization, and let $\mat V_{\ell,t}$ denote the corresponding activation basis recomputed at checkpoint $t$. 
We measure the stability of the activation subspace by
$
    \frac{\|\mat V_{\ell,t}^{\top}\mat V_{\ell,0}\|_F^2}{r}.
$
We also measure the fraction of full-gradient energy captured by the fixed initialization subspace:
$
    \frac{\|\mat V_{\ell,0}^T\mat G_{\ell,t}\|_F^2}
         {\|\mat G_{\ell,t}\|_F^2},
$
where $\mat G_{\ell,t}$ is the full weight gradient of layer $\ell$ at checkpoint $t$.

Figure~\ref{fig:subspace-diagnostic} (a) shows the results at the final checkpoint. 
The activation subspace remains stable across all inspected q-projection layers. 
For $r=1$, the overlap is close to one at all three depths, indicating that the dominant activation direction changes little during fine-tuning. 
For $r=32$, the overlap is slightly lower but still remains high.

The fixed activation subspace also captures meaningful gradient energy. The results are shown in Figure \ref{fig:subspace-diagnostic} (b). Even with $r=1$, the initialization subspace captures a clear portion of the full-gradient energy, showing that the dominant activation direction is already gradient-aligned. 
This is important for ZO-Act, since the rank-one setting uses the smallest perturbation dimension and therefore benefits most from variance reduction. 
When the rank is increased to $r=32$, the captured gradient energy further increases, indicating that additional activation directions provide broader gradient coverage. 
Together, these results demonstrate that a very small activation subspace can already identify useful update directions.

\vspace{-7pt}
\subsection{Effect of Subspace Rank}
\label{sec:rank-ablation}

\begin{table}[t]
\centering
\small
\begin{tabular}{ccc}
\toprule
Rank & Llama-3-8B & OPT-13B \\
\midrule
1   & \textbf{87.0} & \textbf{74.0} \\
32  & 83.0 & 68.2 \\
128 & 84.5 & 71.6 \\
256 & 80.9 & 72.6 \\
\bottomrule
\end{tabular}
\caption{
Effect of subspace rank on RTE test accuracy. 
}
\vspace{-15pt}
\label{tab:rank_ablation}
\end{table}

We further study the effect of the activation subspace rank on RTE. 
Table~\ref{tab:rank_ablation} reports the test accuracy of ZO-Act with different ranks on Llama-3-8B and OPT-13B. 
Interestingly, the rank-one subspace achieves the best performance on both models. 
Increasing the rank does not necessarily improve the final test accuracy. These results suggest that the dominant activation direction already captures a highly effective update subspace for ZO fine-tuning. 
The effect of rank is governed by two opposing forces: larger ranks improve subspace coverage and expressiveness, but also increase the coefficient-space perturbation dimension and hence the variance and difficulty of zeroth-order optimization. 
This explains why the accuracy is non-monotonic in the rank rather than uniformly decreasing: very small ranks may underfit the update subspace, intermediate ranks can be variance-dominated, and the adaptive scaling of Adam partially compensates for the added variance at larger ranks. 
The rank-one subspace nonetheless attains the best accuracy on both models while using the smallest perturbation dimension.
In our main experiments, we therefore use a small activation-informed rank, which provides strong performance while preserving the variance-reduction advantage of ZO-Act.

\subsection{Efficiency Comparison.}

\begin{table}[t]
\centering
\small
\setlength{\tabcolsep}{7.0pt}
\begin{tabular}{lcc}
\toprule
Method & Runtime (min) & Memory (GB) \\
\midrule
LOZO        & 44.5 & \textbf{16.1} \\
SubZero     & 45.5 & \textbf{16.1} \\
AGZO        & 47.4 & \textbf{16.1} \\
ZO-Muon     & 42.6 & 19.0 \\
HiZOO       & 94.0 & 30.0 \\
\midrule
ZO-Act      & \textbf{42.0} & \textbf{16.1} \\
\bottomrule
\end{tabular}
\caption{
Runtime and memory on Llama-3-8B. 
}
\vspace{-15pt}
\label{tab:efficiency}
\end{table}

We compare the practical efficiency of ZO-Act with representative ZO fine-tuning baselines in Table~\ref{tab:efficiency}. 
The comparison is conducted on the RTE task using Llama-3-8B, and all methods are evaluated under the same total forward-query budget of 4500 forward passes.

ZO-Act achieves the fastest runtime among the evaluated methods while matching the lowest memory usage. 
Its memory is the same as LOZO, SubZero, and AGZO, and lower than ZO-Muon and HiZOO. 
The one-shot subspace initialization adds negligible overhead, taking only \textbf{1.5} seconds before fine-tuning. 
This indicates that the activation-informed coefficient-space design introduces little practical overhead while retaining the memory efficiency of forward-only fine-tuning. 
Moreover, when ZO-Act is applied to the INT4 quantized model, the memory usage is further reduced to \textbf{5.8GB}, demonstrating its advantage for low-bit fine-tuning.

\section{Conclusion}

We proposed ZO-Act, a one-shot activation-informed zeroth-order fine-tuning method for large language models. 
ZO-Act uses input activations to construct a fixed low-dimensional subspace and optimizes only lightweight coefficient matrices within this subspace. 
By reducing the perturbation dimension, this design lowers the variance of ZO gradient estimation, enables standard momentum-based optimizers, improves convergence, and naturally supports quantized LLM fine-tuning.

Our theoretical analysis shows that ZO-Act improves the stability of zeroth-order optimization by reducing the variance-dependent convergence term and the finite-difference error. 
It also clarifies the main trade-off introduced by the activation-informed subspace: ZO-Act gains more stable coefficient-space optimization while relying on the selected subspace to capture useful update directions. 
Empirically, ZO-Act achieves strong performance on Llama-3-8B, OPT-13B, and INT4 Llama-3-8B across language understanding, question answering, and commonsense reasoning tasks, consistently improving over strong ZO fine-tuning baselines. 
These results suggest that activation-informed coefficient-space optimization is an effective and practical approach for forward-only fine-tuning of both full-precision and quantized LLMs.

\section{Limitations}

Although ZO-Act substantially improves over existing ZO fine-tuning baselines, it still does not fully close the gap to first-order fine-tuning. 
In particular, first-order methods can often achieve stronger final performance because they use exact backpropagation gradients rather than noisy zeroth-order estimates. 
ZO-Act reduces the variance of ZO estimation by restricting perturbations to an activation-informed coefficient space, but the updates are still based only on forward loss evaluations and therefore remain less informative than full gradients.

ZO-Act also remains slower than standard first-order fine-tuning in terms of wall-clock training time. 
Each ZO update requires multiple forward evaluations to estimate a gradient direction, while first-order methods obtain gradients through a single forward-backward pass. 
As a result, ZO-Act is most useful in settings where backpropagation is memory-prohibitive, unavailable, or difficult to support, such as inference-oriented or quantized deployment environments. 
Improving the runtime efficiency and further narrowing the final performance gap between ZO and FO fine-tuning remain important directions for future work.

\section{Ethical Considerations}

This work focuses on improving the optimization efficiency of zeroth-order fine-tuning for large language models and does not introduce new ethical risks beyond those already associated with fine-tuning pretrained LLMs. 
All experiments are conducted on publicly available models (Llama-3-8B, OPT-13B) and standard public benchmarks (SuperGLUE and commonsense reasoning datasets), used in accordance with their respective licenses and intended research use. 
We do not collect any new data or involve human subjects.

ZO-Act is a general-purpose fine-tuning method and inherits the limitations and potential harms of its underlying pretrained models, including possible biases, factual errors, and harmful generations. 
Because ZO-Act adapts models using only forward loss evaluations, it does not mitigate or amplify these issues by design; practitioners should apply standard safety and bias evaluations before deploying any adapted model. 
By lowering the memory cost of fine-tuning and supporting quantized backbones, ZO-Act may broaden access to LLM adaptation on resource-constrained hardware. 
We view this increased accessibility as largely beneficial, but note that, as with any fine-tuning technique, it could in principle be used to adapt models toward harmful ends.

\bibliography{custom}


\appendix

\section{Proofs and Analysis}
\subsection{Formal Statement and Proof of Theorem \ref{thm:zoact-convergence-informal}}
\label{sec:app-proof}
\begin{theorem}[Convergence of ZO-Act]
\label{thm:zoact-convergence-full}
Let \(\phi(\beta)=F(\theta_0+U\beta)\), where \(\beta\in\mathbb R^k\) collects all trainable coefficient matrices. 
Assume that \(F\) is \(L_F\)-smooth, \(U^\top U=I\), and \(\phi\) is lower bounded by \(\phi_k^{\inf}\). 
At iteration \(t\), ZO-Act samples \(q\) independent Gaussian directions \(z_t^{(j)}\sim\mathcal N(0,I_k)\) and uses the one-sided estimator
\[
    \widehat g_t
    =
    \frac{1}{q}
    \sum_{j=1}^{q}
    \frac{\phi(\beta_t+\mu z_t^{(j)})-\phi(\beta_t)}{\mu}
    z_t^{(j)} .
\]
Suppose the update is \(\beta_{t+1}=\beta_t-\eta\widehat g_t\). 
If
\[
    0<\eta\le
    \frac{1}{8L_F\left(1+\frac{k+1}{q}\right)},
\]
then
\[
\begin{aligned}
    \frac{1}{T}\sum_{t=0}^{T-1}
    \mathbb E\|\nabla\phi(\beta_t)\|^2
    \le\;&
    \frac{4\left(\phi(\beta_0)-\phi_k^{\inf}\right)}{\eta T} \\
    &+
    \frac{5}{8}L_F^2\mu^2 k(k+2)(k+4).
\end{aligned}
\]
In particular, choosing \(\eta=\frac{1}{8L_F\left(1+\frac{k+1}{q}\right)}\) gives
\[
\small
\begin{aligned}
    \frac{1}{T}\sum_{t=0}^{T-1}
    \mathbb E\|\nabla\phi(\beta_t)\|^2
    \le\;&
    \frac{
        32L_F\left(1+\frac{k+1}{q}\right)
        \left(\phi(\beta_0)-\phi_k^{\inf}\right)
    }{T} \\
    &+
    \frac{5}{8}L_F^2\mu^2 k(k+2)(k+4).
\end{aligned}
\]
\end{theorem}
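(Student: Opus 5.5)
The argument is the standard descent-lemma analysis for Gaussian-smoothing zeroth-order methods, carried out in the coefficient space \(\mathbb R^k\).

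\textbf{Step 1: transfer smoothness.} Since \(U^\top U=I\), we have \(\nabla\phi(\beta)=U^\top\nabla F(\theta_0+U\beta)\). Moreover
\[
\|\nabla\phi(\beta)-\nabla\phi(\beta')\|\le\|U\|\,L_F\,\|U(\beta-\beta')\|=L_F\|\beta-\beta'\|,
\]
so \(\phi\) is \(L_F\)-smooth. Everything below then uses only \(L:=L_F\)-smoothness of \(\phi\) on \(\mathbb R^k\).

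\textbf{Step 2: split the estimator.} For each direction write
\[
\frac{\phi(\beta_t+\mu z)-\phi(\beta_t)}{\mu}z=\langle g_t,z\rangle z+e(z),
\]
where \(e(z)=\frac{r(z)}{\mu}z\) and \(|r(z)|\le\frac{L\mu^2}{2}\|z\|^2\) by smoothness. Hence \(\|e(z)\|\le\frac{L\mu}{2}\|z\|^3\), and the Gaussian moment \(\mathbb E\|z\|^6=k(k+2)(k+4)\) gives
\[
\mathbb E\|e\|^2\le\tfrac{L^2\mu^2}{4}k(k+2)(k+4),\qquad \|\mathbb E e\|^2\le \mathbb E\|e\|^2 .
\]
Averaging over \(q\) directions and using the leading-term computation from Section~\ref{sec:theory} (with \(\mathbb E\bar a_t=g_t\) and \(\mathbb E\|\bar a_t-g_t\|^2=\frac{k+1}{q}\|g_t\|^2\)), we get, with \(\bar e\) the averaged error,
\[
\mathbb E_t\widehat g_t=g_t+\mathbb E\bar e,\qquad
\mathbb E_t\|\widehat g_t\|^2\le 2\Bigl(1+\tfrac{k+1}{q}\Bigr)\|g_t\|^2+2\,\mathbb E\|\bar e\|^2 .
\]
Here \(\mathbb E\|\bar e\|^2\le\mathbb E\|e\|^2\) by Jensen.

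\textbf{Step 3: descent inequality.} Smoothness gives
\[
\mathbb E_t\phi(\beta_{t+1})\le\phi(\beta_t)-\eta\langle g_t,\mathbb E_t\widehat g_t\rangle+\tfrac{L\eta^2}{2}\mathbb E_t\|\widehat g_t\|^2 .
\]
We bound the cross term by Young's inequality,
\[
-\langle g_t,\mathbb E\bar e\rangle\le\tfrac12\|g_t\|^2+\tfrac12\|\mathbb E\bar e\|^2 .
\]
The step-size condition \(\eta\le 1/(8L(1+\frac{k+1}{q}))\) makes \(L\eta^2(1+\frac{k+1}{q})\le\eta/8\). The net coefficient of \(-\eta\|g_t\|^2\) is then at least \(1-\frac12-\frac18\ge\frac14\), which produces the factor 4 in the first term. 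The residual error terms are at most \(\eta\bigl(\tfrac12+L\eta\bigr)\mathbb E\|e\|^2\le\eta\cdot\tfrac58\cdot\tfrac{L^2\mu^2}{4}k(k+2)(k+4)\). After multiplying by 4 this gives \(\tfrac58L^2\mu^2k(k+2)(k+4)\), which matches the stated constant (note \(L\eta\le 1/8\)).

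\textbf{Step 4: conclude.} Telescope over \(t=0,\dots,T-1\), take total expectation, use \(\phi(\beta_T)\ge\phi_k^{\inf}\), and divide by \(\eta T/4\). Substituting the maximal \(\eta\) gives the second displayed bound.

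\textbf{Main obstacle.} The delicate part is the constant bookkeeping in Step 3. The bias cross term must be absorbed with the right Young weights, and the second-moment split (factor 2) must be combined with \(L\eta\le 1/8\) so that the constants come out exactly as \(4\) and \(\tfrac58\). If the naive split does not land on \(\frac58\), I would first try tightening the second-moment split to \((1+\epsilon)\)/\((1+1/\epsilon)\) weights. I would also use the exact one-sided moment identity instead of the triangle inequality.
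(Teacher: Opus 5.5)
Your proposal is correct and follows the paper's proof essentially step for step. It uses the same split of each one-sided estimate into the Gaussian term plus a remainder, the same second-moment bound $2\bigl(1+\frac{k+1}{q}\bigr)\|g_t\|^2+\frac{L_F^2\mu^2}{2}k(k+2)(k+4)$, and a Young split of the bias cross term giving $\frac{L_F^2\mu^2}{8}k(k+2)(k+4)$. It then uses $L_F\eta\le\frac{1}{8}$ to reach a per-step error of $\frac{5\eta L_F^2\mu^2}{32}k(k+2)(k+4)$, which telescopes to $\frac{5}{8}$.

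The only difference is cosmetic. The paper controls the cross term through $\|g_t\|\,\mathbb E\|\bar b_t\|$ and $\mathbb E\|z\|^3\le\sqrt{k(k+2)(k+4)}$, while you use $\|\mathbb E\bar e\|^2\le\mathbb E\|e\|^2$. Both give the same constant, so the hedging in your ``main obstacle'' paragraph is unnecessary.
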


\begin{proof}
Since \(F\) is \(L_F\)-smooth and \(U^\top U=I\), the restricted objective \(\phi(\beta)=F(\theta_0+U\beta)\) is also \(L_F\)-smooth. 
Indeed, \(\nabla\phi(\beta)=U^\top\nabla F(\theta_0+U\beta)\), and hence
\[
    \|\nabla\phi(\beta)-\nabla\phi(\beta')\|
    \le
    L_F\|\beta-\beta'\|.
\]
Let \(g_t=\nabla\phi(\beta_t)\). 
For each Gaussian direction \(z_t^{(j)}\), define
\[
    \widehat g_t^{(j)}
    =
    \frac{\phi(\beta_t+\mu z_t^{(j)})-\phi(\beta_t)}{\mu}
    z_t^{(j)},
\]
\[
    \widehat g_t=\frac{1}{q}\sum_{j=1}^{q}\widehat g_t^{(j)} .
\]
By \(L_F\)-smoothness of \(\phi\), for any \(z\),
\[
    \phi(\beta_t+\mu z)
    =
    \phi(\beta_t)
    +
    \mu\langle g_t,z\rangle
    +
    R_t(z),
\]
\[
    |R_t(z)|\le \frac{L_F\mu^2}{2}\|z\|^2 .
\]
We write $
    \widehat g_t^{(j)}
    =
    a_t^{(j)}+b_t^{(j)}$
where,
    $a_t^{(j)}=\langle g_t,z_t^{(j)}\rangle z_t^{(j)},$
    and 
    $b_t^{(j)}=\frac{R_t(z_t^{(j)})}{\mu}z_t^{(j)}.$

The remainder term satisfies \(\|b_t^{(j)}\|\le (L_F\mu/2)\|z_t^{(j)}\|^3\). 
Let \(\bar a_t=q^{-1}\sum_{j=1}^{q}a_t^{(j)}\) and \(\bar b_t=q^{-1}\sum_{j=1}^{q}b_t^{(j)}\), so that \(\widehat g_t=\bar a_t+\bar b_t\).

We first lower bound the expected descent direction. 
Since \(z_t^{(j)}\sim\mathcal N(0,I_k)\), we have \(\mathbb E[a_t^{(j)}]=g_t\), and hence \(\mathbb E[\bar a_t]=g_t\). 
Moreover,
\[
    \mathbb E\|\bar b_t\|
    \le
    \frac{1}{q}\sum_{j=1}^{q}\mathbb E\|b_t^{(j)}\|
    \le
    \frac{L_F\mu}{2}\mathbb E\|z\|^3 .
\]
It holds \(\mathbb E\|z\|^3\le (\mathbb E\|z\|^6)^{1/2}\) and \(\mathbb E\|z\|^6=k(k+2)(k+4)\). We define \(M_k=k(k+2)(k+4)\),
Then \(\mathbb E\|\bar b_t\|\le (L_F\mu/2)\sqrt{M_k}\). 
Thus,
\[
\begin{aligned}
    \mathbb E\langle g_t,\widehat g_t\rangle
    &=
    \mathbb E\langle g_t,\bar a_t\rangle
    +
    \mathbb E\langle g_t,\bar b_t\rangle \\
    &\ge
    \|g_t\|^2
    -
    \|g_t\|\mathbb E\|\bar b_t\| \\
    &\ge
    \|g_t\|^2
    -
    \left(
        \frac{1}{2}\|g_t\|^2
        +
       \frac{1}{2} (\mathbb E\|\bar b_t\|)^2
    \right) \\
    &=
    \frac{1}{2}\|g_t\|^2
    -
    \frac{L_F^2\mu^2}{8}M_k .
\end{aligned}
\]

Next, we upper bound the second moment of \(\widehat g_t\). 
For the leading Gaussian term,
\[
    \mathbb E\|a_t^{(j)}\|^2
    =
    \mathbb E[\langle g_t,z\rangle^2\|z\|^2]
    =
    (k+2)\|g_t\|^2 .
\]
Therefore, since the \(a_t^{(j)}\)'s are independent and each has mean \(g_t\),
\[
\begin{aligned}
    \mathbb E\|\bar a_t\|^2
    &=
    \|\mathbb E\bar a_t\|^2
    +
    \mathbb E\|\bar a_t-\mathbb E\bar a_t\|^2 \\
    &=
    \|g_t\|^2
    +
    \frac{1}{q}\mathbb E\|a_t^{(j)}-g_t\|^2 \\
    &=
    \left(1+\frac{k+1}{q}\right)\|g_t\|^2 .
\end{aligned}
\]
For the finite-difference remainder, Jensen's inequality gives
\[
    \mathbb E\|\bar b_t\|^2
    \le
    \frac{1}{q}\sum_{j=1}^{q}\mathbb E\|b_t^{(j)}\|^2
    \le
    \frac{L_F^2\mu^2}{4}M_k .
\]
Using \(\|\bar a_t+\bar b_t\|^2\le 2\|\bar a_t\|^2+2\|\bar b_t\|^2\), we obtain
\[
    \mathbb E\|\widehat g_t\|^2
    \le
    2\left(1+\frac{k+1}{q}\right)\|g_t\|^2
    +
    \frac{L_F^2\mu^2}{2}M_k .
\]
Let \(A_k=1+(k+1)/q\), then we have
\[
    \mathbb E\|\widehat g_t\|^2
    \le
    2A_k\|g_t\|^2
    +
    \frac{L_F^2\mu^2}{2}M_k .
\]

By smoothness of \(\phi\), the update \(\beta_{t+1}=\beta_t-\eta\widehat g_t\) satisfies
\[
    \phi(\beta_{t+1})
    \le
    \phi(\beta_t)
    -
    \eta\langle g_t,\widehat g_t\rangle
    +
    \frac{L_F\eta^2}{2}\|\widehat g_t\|^2 .
\]
Taking conditional expectation and substituting the two bounds above gives
\[
\begin{aligned}
    \mathbb E_t[\phi(\beta_{t+1})]
    \le\;&
    \phi(\beta_t)
    -
    \eta
    \left(
        \frac{1}{2}\|g_t\|^2
        -
        \frac{L_F^2\mu^2}{8}M_k
    \right) \\
    &+
    \frac{L_F\eta^2}{2}
    \left(
        2A_k\|g_t\|^2
        +
        \frac{L_F^2\mu^2}{2}M_k
    \right).
\end{aligned}
\]
Rearranging,
\[
\begin{aligned}
    \mathbb E_t[\phi(\beta_{t+1})]
    \le\;&
    \phi(\beta_t)
    -
    \left(
        \frac{\eta}{2}
        -
        L_F\eta^2A_k
    \right)\|g_t\|^2 \\
    &+
    \frac{\eta L_F^2\mu^2}{8}M_k
    +
    \frac{L_F^3\eta^2\mu^2}{4}M_k .
\end{aligned}
\]
If \(0<\eta\le \frac{1}{8L_FA_k}\), then \(\eta/2-L_F\eta^2A_k\ge \eta/4\). 
Also, since \(A_k\ge 1\), we have \(L_F\eta\le 1/8\), and hence
\[
    \frac{L_F^3\eta^2\mu^2}{4}M_k
    \le
    \frac{\eta L_F^2\mu^2}{32}M_k .
\]
Therefore,
\[
    \mathbb E_t[\phi(\beta_{t+1})]
    \le
    \phi(\beta_t)
    -
    \frac{\eta}{4}\|g_t\|^2
    +
    \frac{5\eta L_F^2\mu^2}{32}M_k .
\]
Taking total expectation and summing from \(t=0\) to \(T-1\), we get
\[
    \frac{\eta}{4}\sum_{t=0}^{T-1}
    \mathbb E\|g_t\|^2
    \le
    \phi(\beta_0)-\mathbb E[\phi(\beta_T)]
    +
    \frac{5\eta L_F^2\mu^2}{32}M_k T .
\]
Since \(\phi(\beta_T)\ge \phi_k^{\inf}\), dividing by \(\eta T/4\) yields
\[
    \frac{1}{T}\sum_{t=0}^{T-1}
    \mathbb E\|\nabla\phi(\beta_t)\|^2
    \le
    \frac{4(\phi(\beta_0)-\phi_k^{\inf})}{\eta T}
    +
    \frac{5}{8}L_F^2\mu^2M_k .
\]
This proves the theorem.
\end{proof}

\section{Experiment Setups}
\label{sec:app-exp}
In all experiments, ZO-Act is applied to all linear layers except the embedding layers and task-specific linear heads.

\paragraph{Language understanding and question answering.}
Following the setting of \citet{lang2026powering}, ZO-Act is evaluated on six standard language understanding and question answering benchmarks: SST-2, RTE, CB, BoolQ, WiC, and SQuAD in SuperGLUE \citep{wang2019superglue}. 
F1 score is reported for SQuAD, while accuracy is reported for all other tasks. 
Experiments are conducted on two full-precision LLMs, Llama-3-8B and OPT-13B, as well as an INT4 quantized Llama-3-8B model. 
Following prior ZO fine-tuning work \citep{malladi2023fine}, we randomly sample 1{,}000 training examples and 1{,}000 test examples for each task, and use the same prompts as MeZO \citep{malladi2023fine}.

\paragraph{Commonsense reasoning.}
The commonsense reasoning benchmark consists of eight tasks: ARC-Challenge, ARC-Easy \citep{clark2018think}, BoolQ \citep{clark2019boolq}, HellaSwag \citep{zellers2019hellaswag}, OpenBookQA \citep{mihaylov2018can}, PIQA \citep{bisk2020piqa}, SocialIQA \citep{sap2019social}, and WinoGrande \citep{sakaguchi2021winogrande}. 
ZO-Act is fine-tuned on the combined training data using Llama-3-8B, and performance is reported as accuracy on each individual task as well as the average accuracy across all tasks.

The hyperparameters are reported in Table \ref{tab:hyper-llama3}, Table \ref{tab:hyper-llama3-int4}, and Table \ref{tab:hyper-opt13b}.

\begin{table*}[t]
\centering
\small
\setlength{\tabcolsep}{4pt}
\begin{tabular}{lccccccc}
\toprule
 & CB & RTE & WiC & SST-2 & BoolQ & SQuAD & Commonsense \\
\midrule
Rank & \multicolumn{7}{c}{$1$}  \\
Optimizer & \multicolumn{6}{c}{Adam / SGD} & Adam \\
LR (Adam) 
& $5\times10^{-5}$ & $5\times10^{-5}$ & $4\times10^{-5}$ & $3\times10^{-5}$ 
& $5\times10^{-5}$ & $2.5\times10^{-5}$ & $2\times10^{-5}$ \\
LR (SGD) 
& $3\times10^{-6}$ & $3\times10^{-6}$ & $3\times10^{-6}$ & $3\times10^{-6}$ 
& $3\times10^{-6}$ & $5\times10^{-6}$ & -- \\
Scheduler & \multicolumn{7}{c}{constant with warmup} \\
Weight decay & \multicolumn{7}{c}{$0$} \\
$\mu$ & \multicolumn{7}{c}{$10^{-3}$} \\
\bottomrule
\end{tabular}
\caption{Hyperparameters for ZO-Act on Llama-3-8B.}
\label{tab:hyper-llama3}
\end{table*}

\begin{table*}[t]
\centering
\small
\setlength{\tabcolsep}{4pt}
\begin{tabular}{lccccccc}
\toprule
 & CB & RTE & WiC & SST-2 & BoolQ & SQuAD & Commonsense \\
\midrule
Rank & \multicolumn{7}{c}{$32$}  \\
Optimizer & \multicolumn{7}{c}{Adam}  \\
LR (Adam) 
& $6\times10^{-5}$ & $5\times10^{-5}$ & $5\times10^{-5}$ & $3\times10^{-5}$ 
& $3\times10^{-5}$ & $3\times10^{-5}$ & $2\times10^{-5}$ \\
Scheduler & \multicolumn{7}{c}{constant with warmup}  \\
Weight decay & \multicolumn{7}{c}{$0$}  \\
$\mu$ & \multicolumn{7}{c}{$10^{-3}$}  \\
\bottomrule
\end{tabular}
\caption{Hyperparameters for ZO-Act on INT4 quantized Llama-3-8B.}
\label{tab:hyper-llama3-int4}
\end{table*}

\begin{table*}[t]
\centering
\small
\setlength{\tabcolsep}{4pt}
\begin{tabular}{lccccccc}
\toprule
 & CB & RTE & WiC & SST-2 & BoolQ & SQuAD  \\
\midrule
Rank & \multicolumn{6}{c}{$1$}  \\
Optimizer & \multicolumn{6}{c}{Adam / SGD}  \\
LR (Adam) 
& $4\times10^{-5}$ & $2.5\times10^{-5}$ & $2\times10^{-5}$ & $2.5\times10^{-5}$ 
& $2\times10^{-5}$ & $2.5\times10^{-5}$  \\
LR (SGD) 
& $7\times10^{-7}$ & $1\times10^{-6}$ & $1\times10^{-6}$ & $1\times10^{-6}$ 
& $1\times10^{-6}$ & $1\times10^{-6}$  \\
Scheduler & \multicolumn{6}{c}{constant with warmup}  \\
Weight decay & \multicolumn{6}{c}{$0$}  \\
$\mu$ & \multicolumn{6}{c}{$10^{-3}$}  \\
\bottomrule
\end{tabular}
\caption{Hyperparameters for ZO-Act on OPT-13B.}
\label{tab:hyper-opt13b}
\end{table*}

\end{document}